\documentclass[pmlr]{jmlr}

\usepackage{booktabs}
\usepackage{siunitx}
\usepackage{graphicx}
\usepackage{amsmath,amssymb}
\usepackage{setspace}
\usepackage{microtype}
\usepackage{xspace}
\usepackage{tikz}
\usepackage{multirow}
\usepackage{makecell}
\usepackage{algorithm2e}

\usepackage[separate-uncertainty=true, group-digits=integer]{siunitx}
\ExplSyntaxOn
\NewDocumentCommand{\bsc}{m}
 {
  \textbf
   {
    \tl_map_inline:nn { #1 }
     {
      \regex_match:nnTF { [a-z] } { ##1 }
       { {\scriptsize \MakeUppercase{##1}} }
       { ##1 }
     }
   }
 }
\ExplSyntaxOff

\ExplSyntaxOn
\NewDocumentCommand{\autobsc}{m}
 {
  \tl_map_inline:nn { #1 }
   {
    \regex_match:nnTF { [a-z] } { ##1 }
     { {\scriptsize \MakeUppercase{##1}} }
     { ##1 }
   }
 }
\ExplSyntaxOff

\jmlrvolume{TBD}
\jmlryear{2026}
\jmlrworkshop{Probabilistic Graphical Models (PGM)}
\editor{Gustau Camps-Valls, Manuele Leonelli and Gherardo Varando}

\newcommand{\fedrcd}{\textsc{FedRCD}\xspace}
\newcommand{\fedrcdni}{\textsc{FedRCD-NI}\xspace}
\newcommand{\fedrcddef}{\textsc{FedRCD-Def}\xspace}
\newcommand{\fedrcdx}{\textsc{FedRCD-X}\xspace}
\newcommand{\fedishc}{\textsc{FedISHC}\xspace}
\newcommand{\fedishcfour}{\textsc{FedISHC}($\tau^{(4)}$)\xspace}
\newcommand{\fedhc}{\textsc{FedHC}\xspace}

\newcommand{\lingam}{\textsc{LiNGAM}\xspace}
\newcommand{\hclingam}{\textsc{HC-LiNGAM}\xspace}
\newcommand{\directlingam}{\textsc{DirectLiNGAM}\xspace}
\newcommand{\hc}{\textsc{HC}\xspace}
\newcommand{\icalingam}{\textsc{ICA-LiNGAM}\xspace}
\newcommand{\AncAcc}{\textsc{AncAcc}\xspace}
\newcommand{\er}{Erd\H{o}s-R\'enyi\xspace}
 
\newcommand{\bX}{\mathbf{X}}
\newcommand{\bB}{\mathbf{B}}
\newcommand{\bE}{\mathbf{E}}
\newcommand{\bA}{\mathbf{A}}

\newcommand{\bI}{\mathbf{I}}
\newcommand{\Sig}{\boldsymbol{\Sigma}}

\newcommand{\R}{\mathbb{R}}
\newcommand{\G}{\mathcal{G}}

\title{Federated Causal Discovery via Regression-Directed Cumulants}

\author{%
  \Name{Pablo Torrijos} \Email{pablo.torrijos@uclm.es}\\
  \addr Departamento de Sistemas Inform\'aticos\\
  Universidad de Castilla-La Mancha\\
  Albacete, Spain
  \AND
  \Name{Fabio Stella} \Email{fabio.stella@unimib.it}\\
  \addr Dipartimento di Informatica, Sistemistica e Comunicazione\\
  Universit\`a degli Studi di Milano Bicocca\\
  Milano, Italy
  \AND
  \Name{Jos\'e A. G\'amez} \Email{jose.gamez@uclm.es}\\
  \addr Departamento de Sistemas Inform\'aticos\\
  Universidad de Castilla-La Mancha\\
  Albacete, Spain
  \AND
  \Name{Jos\'e M. Puerta} \Email{jose.puerta@uclm.es}\\
  \addr Departamento de Sistemas Inform\'aticos\\
  Universidad de Castilla-La Mancha\\
  Albacete, Spain
}

\begin{document}

\maketitle

%
%
\begin{abstract}
In this paper we study linear non-Gaussian acyclic models (LiNGAM) when used in federated environments. These causal models allow one to go beyond Markov equivalence. However, in many domains data are scarce, and increasing the sample size by centralising data from different clients is not advisable due to regulations such as the General Data Protection Regulation (GDPR). The federated environment offers an attractive option to balance privacy and causal discovery accuracy. Unfortunately, the standard centralised estimator in the LiNGAM setting, i.e., DirectLiNGAM, cannot be straightforwardly federated. Higher-order cumulant tensors offer a way around this obstacle: they depend only on the joint distribution of the variables involved and add exactly across independent sample groups, so a single communication round suffices in horizontal, vertical, and hybrid partitions.
However, FedISHC, i.e., the current federated method along these lines,  breaks down under near-symmetric noise. To overcome the above limitation, we introduce the FedRCD family of causal discovery algorithms, and investigate three variants that trade off communication rounds against algebraic noise; two of them are exact federated counterparts of the centralised high-order cumulant (HC) and HC-LiNGAM algorithms, and the single-round variants further effectively support exact unlearning at any granularity, from a single observation to a whole client. Numerical experiments show that at sample sizes typical of real deployments, the entire cumulant-based federated family does not actually rank variables by the population asymmetry that the scores encode at zero. It ranks them by a variance ladder induced by the DAG along its directed paths, the cumulant counterpart of varsortability. Marginal standardisation collapses every cumulant method to near-random ordering, while scale-invariant DirectLiNGAM, not federable under this protocol, is unaffected.
\end{abstract}

\begin{keywords}
    Federated causal discovery; horizontal, vertical, and hybrid federation; \lingam; higher-order cumulants; federated unlearning.
\end{keywords}

\sloppy

%
%
\section{Introduction} \label{sec:introduction}
Causal structure learning from observational data drives applications in genomics~\citep{TejadaLapuerta2025}, protein signalling~\citep{Zhai2025}, econometrics~\citep{moneta11}, or epidemiology~\citep{Ferrari2022,Wang2022}. However, in some domains, especially in healthcare, data are scarce and the only option to improve sample size is to combine data from patients cohorts across multiple hospitals and/or research centres. While this option serves the purpose of increasing the sample size and thus improving the power of causal learning, it brings severe concerns about data privacy. Privacy regulations such as the General Data Protection Regulation impose strict constraints on sharing raw personal data outside the institution where they were collected. Federated learning~\citep{mcmahan2017,Kairouz2021} addresses the tension between privacy and causal learning power: clients keep their data local and exchange only aggregated statistics through a central server. The federated setting itself splits into three partitioning regimes~\citep{Zhang_survey_fl_KBS_2021}: \emph{horizontal} (same variables, different samples), \emph{vertical} (same samples, different variables), and \emph{hybrid}.

Standard approaches to causal discovery have been adapted to the federated setting. Constraint-based federated methods such as federated PC variants~\citep{Wang2023fed,Huang2023} rely on conditional independence tests over the full conditioning set, while continuous optimisation procedures~\citep{Ng2022fed} rely on algebraic score functions. Both families generally recover only Markov equivalence classes, and both struggle in vertical and hybrid regimes: when the variables required for a conditional independence test are split across clients, the test cannot be evaluated without pooling data, and partial-overlap workarounds introduce spurious edges that propagate through aggregation.
An option to go beyond Markov equivalence and recover a causal ordering is given by the \lingam framework~\citep{shimizu2006,shimizu2011} under the assumption of non-Gaussian exogenous noise. Unfortunately, its state-of-the-art estimator, \directlingam~\citep{shimizu2011}, relies on nonparametric independence tests applied to centralised data, and cannot be straightforwardly federated. Recent centralised methods \citep{Chen2025} replace those tests with closed-form pairwise scores based on higher-order cumulants. This is the natural primitive for federation as cumulants depend only on the joint distribution of the variables involved~\citep{Brillinger2001}, so missing variables on a client do not bias the entries the client can compute (this handles vertical partitioning). They are also additive over independent sample groups~\citep{Speed1983}, so per-client raw-moment sums add up to the global tensor (this handles horizontal partitioning). The protocol is identical for horizontal, vertical, and hybrid partitions, runs in a single round, and supports exact federated unlearning by subtracting specific raw moments.
The current federated method along these lines, \fedishc~\citep{fedishc2026}, instantiates this protocol with third-order cumulants and runs sequential deflation on the server. Its limitation under symmetric noise is expected: its identification score and deflation coefficient both divide by the candidate source's skewness, which is zero for symmetric distributions (Proposition~\ref{prop:cdl}). Fixing this requires more than swapping the third-order score for a fourth-order one: \fedishc updates only third-order arrays at each iteration, so a fourth-order score evaluated on those arrays is invariant under the iteration and behaves like a single-pass ranking. Because identification and deflation must be redesigned together, we introduce the \fedrcd family to solve the symmetric-noise failure by pairing a fourth-order source criterion with a stable covariance-based deflation coefficient whose denominator is bounded away from zero by construction.

There is, however, a less flattering question to ask of any cumulant-based estimator. The asymmetry that the third- and fourth-order pairwise scores encode vanishes for true sources at the population level, which is precisely how the theory identifies them. A relevant question we ask is as follows: at the sample sizes that real federated deployments actually see, what dominates the ranking? We find that the dominant signal is a variance ladder induced by the DAG along its directed paths. Under \lingam, descendants accumulate variance from their ancestors, the marginal cumulants inherit that scale at every order, and the row sums that drive identification across the entire cumulant family, federated or centralised, line up almost perfectly with depth. This is the cumulant counterpart of \emph{varsortability}~\citep{reisach2021nips}, the same scale signal that has been documented to drive continuous optimisation methods such as NOTEARS~\citep{Zheng2018NOTEARS}. Marginal standardisation removes the ladder and collapses every cumulant-based method to near-random ranking, including the centralised baselines \hc and \hclingam; \directlingam, scale-invariant by construction,~is~unaffected.

The main contributions of this paper are the following: 
1) we formalise the symmetric-noise limit of \fedishc and explain why a fourth-order score on its own does not repair it; 
2) we introduce the \fedrcd family, pairing a fourth-order source criterion with a stable covariance-based deflation coefficient, where three variants trade off communication rounds against algebraic noise;
3) we show that the entire cumulant-based federated family ranks variables by a variance ladder induced by the DAG. This places it in the scale-dependent regime that~\citet{reisach2021nips} identified for MSE-based continuous methods;
and finally 4) we provide extensive empirical evidence across \er DAGs under eleven noise families and eight \texttt{bnlearn} BN repository topologies.

The rest of the paper is organised as follows: Section~\ref{sec:background} reviews \lingam, fourth-order identification, and the federated setting. Section~\ref{sec:methods} develops the \fedrcd family. Section~\ref{sec:experiments} presents the empirical evaluation including the stratification diagnostic. Section~\ref{sec:conclusion} concludes.

%
%
\section{Background and Problem Formulation} \label{sec:background}

\subsection{The LiNGAM Model} \label{sec:lingam}
Let $\bX = (x_1, \ldots, x_p)^\top \in \R^p$ be a vector of observed variables. The \lingam framework~\citep{shimizu2006,shimizu2011} models their causal structure as $\bX = \bB\bX + \bE$, where $\bB \in \R^{p \times p}$ is a matrix of causal coefficients that can be permuted to strictly lower-triangular form, and $\bE = (e_1, \ldots, e_p)^\top$ is a vector of mutually independent, non-Gaussian disturbances with $\mathbb{E}[e_i] = 0$ and $\mathbb{E}[e_i^2] = \sigma_i^2 > 0$. Solving for $\bX$ gives the mixing form $\bX = \bA\bE$ with $\bA = (\bI - \bB)^{-1}$, where $\bI$ denotes the $p \times p$ identity matrix. \lingam offers a stronger guarantee than constraint-based methods: as long as at most one disturbance is Gaussian, $\bB$ is uniquely identified from the joint distribution of $\bX$ alone, that is, the entire DAG is recovered rather than only its Markov equivalence class~\citep{shimizu2006}. The required assumptions are linearity, acyclicity, mutual independence of noise terms, and no unobserved confounders; heteroscedastic noise is allowed.

Two classical \lingam estimators are \icalingam~\citep{shimizu2006}, which recovers the causal order via Independent Component Analysis (ICA) on the mixing matrix, and \directlingam~\citep{shimizu2011}, which does so iteratively via regression and nonparametric independence tests between candidate sources and residuals. Both achieve strong structural recovery at high computational cost, and neither admits federation by aggregated statistics alone: ICA operates on the raw data matrix, and the kernel independence tests need joint access to the variables they are testing.

\subsection{Higher-Order Cumulants and Source Identification} \label{sec:cumulants}
The $m$-th order marginal cumulant of $x_i$ is denoted $\kappa_m(x_i)$~\citep{Brillinger2001}, and $\kappa_{m_1,m_2}(x_i, x_j)$ denotes the joint cumulant with $m_1$ copies of $x_i$ and $m_2$ copies of $x_j$. We use $\kappa_\bullet$ for population cumulants and $\hat{\kappa}_\bullet$ for their empirical estimates. We use the third- and fourth-order self-cumulants $\kappa_3, \kappa_4$ and the joint cumulants $\kappa_{1,2}, \kappa_{2,1}, \kappa_{1,3}, \kappa_{2,2}, \kappa_{3,1}$, all of which are particular instances of this notation. Cumulants are multilinear, additive for independent variables, and vanish at order $\geq 3$ for Gaussian variables. Non-Gaussianity breaks the directional symmetry of joint cumulants and is what supplies the statistical signal that distinguishes cause from effect.
The fourth-order pairwise asymmetry score~\citep{Chen2025}~is
\begin{equation} \label{eq:tau4}
    \tau_{ij} = \bigl|\kappa_4(x_i)\,\kappa_{1,3}(x_i, x_j) - \kappa_{2,2}(x_i, x_j)\,\kappa_{3,1}(x_i, x_j)\bigr|.
\end{equation}
In the population limit, $\tau_{ij} = 0$ if and only if $x_i$ is an ancestor of $x_j$ (or the two variables are independent), and $\tau_{ij} > 0$ otherwise~\citep[Theorems~2--3]{Chen2025}. A \emph{source node} of the DAG is a variable with no incoming edges; $x_s$ is a source if and only if $\sum_{j \neq s} \tau_{sj} = 0$. The source at each step of an ordering algorithm is found by solving the optimisation~problem
\begin{equation} \label{eq:source}
    s = \arg\min_{i \in U} \sum_{j \in U,\, j \neq i} \tau_{ij},
\end{equation}
where $U \subseteq \{1, \ldots, p\}$ collects the indices of variables not yet placed in the partial order (the \emph{active set}). High-order cumulant \hc~\citep{Chen2025} identifies each source via~\eqref{eq:source}, deflates by ordinary least squares OLS ($x_j \leftarrow x_j - \hat{\beta}_{js}\,x_s$, $\hat{\beta}_{js} = \hat{\Sigma}_{js}/\hat{\Sigma}_{ss}$), and repeats at $O(np^3)$ total cost. \hclingam~\citep{Chen2025} computes the $\tau$ matrix once and sorts globally by the row sums $T_\tau(x_i) = \sum_{j \neq i} \tau_{ij}$ at $O(np^2)$ cost. Both are centralised reference points for the methods of Section~\ref{sec:methods}.

\subsection{Federated Causal Discovery} \label{sec:federated}
Data are distributed across $K$ clients that cannot share raw observations. Client $k$ holds dataset $\mathcal{D}_k$ over variable set $\bX_k \subseteq \bX$ with $n_k$ samples, where $\bX = \bigcup_{k=1}^K \bX_k$ and $N = \sum_{k=1}^K n_k$. Following~\citet{fedishc2026}, we require that for every pair $(x_i, x_j)$ at least one client holds observations for both. The condition is milder than asking for a common complete variable set, and is what enables coverage of the full joint cumulant tensor; pair-coverage gaps would leave the corresponding cumulant entries unidentifiable.

Each joint cumulant $\kappa_{m_1, m_2}(x_i, x_j)$ is a function of the bivariate distribution of $(x_i, x_j)$ alone, so clients that do not observe both variables contribute nothing to that entry. Cumulants are additive over independent sample groups~\citep{Speed1983}, so the global cumulant tensor (the array indexed by all required pairs and orders) is recovered exactly from per-client raw-moment sums by a weighted sum, regardless of whether the partition is horizontal, vertical, or hybrid. Each client transmits $O(p_k^2)$ floats with $p_k = |\bX_k|$, and the server pools these into global estimates that are numerically identical to centralised computation on the full dataset. Neither property holds for residuals or kernel-based independence tests on raw data, which therefore fall outside this protocol.

\paragraph{\fedishc.}
\fedishc~\citep{fedishc2026} aggregates third-order cumulants in a single round and runs sequential deflation on the server. Sources are identified by the third-order~score
\begin{equation} \label{eq:tau3}
    \tau^{(3)}_{ij} = \bigl|\kappa_3(x_i)\,\kappa_{1,2}(x_i,x_j) - \kappa_{2,1}(x_i,x_j)\,\kappa_{1,2}(x_j,x_i)\bigr|.
\end{equation}
The causal influence of an identified source $x_s$ on each remaining variable $x_j$ is estimated as
\begin{equation} \label{eq:alpha_ishc}
    \hat{\alpha}_{js} = \frac{\hat{\kappa}_{2,1}(x_j,\, x_s)}{\hat{\kappa}_3(x_s)},
\end{equation}
and the third-order cumulant arrays are updated via
\begin{equation} \label{eq:ishc_update}
    \hat{\kappa}_3(x_j)' = \hat{\kappa}_3(x_j) - \hat{\alpha}_{js}^3\,\hat{\kappa}_3(x_s).
\end{equation}
Both~\eqref{eq:tau3} and~\eqref{eq:alpha_ishc} carry $\kappa_3(x_s)$ in their denominator, which Section~\ref{sec:pathologies} exploits to formalise the limit. \citet{fedishc2026} also introduce \fedhc, a no-deflation variant that uses~\eqref{eq:tau3} and sorts variables by row sums in a single pass. \fedhc inherits the symmetric-noise weakness of $\tau^{(3)}$, but its absence of deflation prevents error compounding across the $p-1$ steps.

\subsection{Limitations of \fedishc under symmetric noise} \label{sec:pathologies}
Under symmetric noise the third self-cumulant $\kappa_3(x_s)$ vanishes in population, and at finite samples it is dominated by sampling fluctuation. \fedishc places this quantity in the denominator of both its identification score~\eqref{eq:tau3} and its deflation coefficient~\eqref{eq:alpha_ishc}. The consequences for accuracy under symmetric noise follow directly from this. We record the variance bound on $\hat{\alpha}_{js}$ for completeness:

\begin{proposition}[Symmetric-noise variance bound] \label{prop:cdl}
    Let $\hat{\kappa}_3(x_s)$ and $\hat{\kappa}_{2,1}(x_j, x_s)$ be unbiased aggregated estimators~\citep{schefczik2019} based on $N$ total samples. Assume all moments of $e_s$ up to order six are finite, and consider the regime $\kappa_3(x_s) \neq 0$. By the delta method applied to $f(a,b) = a/b$,
    \begin{equation} \label{eq:cdl_variance}
        \mathrm{Var}[\hat{\alpha}_{js}] = \frac{\mathrm{Var}[\hat{\kappa}_{2,1}]}{\kappa_3(x_s)^2} + \frac{\kappa_{2,1}(x_j,x_s)^2\,\mathrm{Var}[\hat{\kappa}_3(x_s)]}{\kappa_3(x_s)^4} - \frac{2\kappa_{2,1}(x_j,x_s)}{\kappa_3(x_s)^3}\,\mathrm{Cov}[\hat{\kappa}_{2,1},\hat{\kappa}_3] + O(N^{-2}).
    \end{equation}
    Under the \lingam model, $\kappa_{2,1}(x_j,x_s) = \beta_{js}^2\kappa_3(x_s)$, so all three terms scale as $\kappa_3(x_s)^{-2}/N$ and the leading order is $\Theta(\kappa_3(x_s)^{-2}/N)$. Along any sequence of \lingam distributions with $\kappa_3(x_s) \to 0$ at fixed $\beta_{js}$, the variance bound diverges.
\end{proposition}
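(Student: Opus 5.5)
The plan is to apply the first-order delta method directly to the ratio $\hat{\alpha}_{js} = f(\hat a,\hat b)$ with $f(a,b)=a/b$, $\hat a = \hat{\kappa}_{2,1}(x_j,x_s)$ and $\hat b = \hat{\kappa}_3(x_s)$. First I would check the regularity conditions. Because the aggregated estimators are unbiased and are smooth functions of pooled raw moments up to order three, a multivariate CLT applies to the vector of sample moments. This needs finite second moments of products of degree three, which is exactly the stated finiteness of moments up to order six. It gives $\sqrt{N}(\hat a - a, \hat b - b) \Rightarrow \mathcal{N}(0, V)$ with covariance $V = O(1)$ in $N$. Since $b=\kappa_3(x_s)\neq 0$, $f$ is differentiable at $(a,b)$ with gradient $\nabla f = (1/b,\,-a/b^2)$. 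The quadratic form $\nabla f^\top \mathrm{Cov}[(\hat a,\hat b)]\,\nabla f$ gives the three displayed terms $\mathrm{Var}[\hat a]/b^2 + a^2\mathrm{Var}[\hat b]/b^4 - 2a\,\mathrm{Cov}[\hat a,\hat b]/b^3$. The second-order Taylor remainder contributes $O(N^{-2})$, because the third and fourth central moments of the moment averages are $O(N^{-2})$.

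Second, I would establish the \lingam identity $\kappa_{2,1}(x_j,x_s)=\beta_{js}^2\kappa_3(x_s)$. Here $x_s$ is the current source (so $x_s=e_s$ after the preceding deflations), and $x_j = \beta_{js}x_s + r_j$ with $r_j$ independent of $e_s$. I read $\beta_{js}$ as the total effect, i.e.\ the entry $A_{js}$ of the mixing matrix. By multilinearity, $\kappa_{2,1}(x_j,x_s)$ expands into joint cumulants of $x_s$ and $r_j$. Every mixed cumulant involving both $r_j$ and $x_s$ vanishes by independence, so only the term $\beta_{js}^2\kappa(x_s,x_s,x_s)=\beta_{js}^2\kappa_3(x_s)$ survives.

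Substituting $a=\beta_{js}^2 b$ shows that all three terms carry the common factor $b^{-2}$. The leading term then collapses to
\[
\frac{\mathrm{Var}\bigl[\hat{\kappa}_{2,1}(x_j,x_s)-\beta_{js}^2\hat{\kappa}_3(x_s)\bigr]}{\kappa_3(x_s)^2},
\]
and this numerator is $c/N$ with $c$ a polynomial in moments of $e_s$ and $r_j$ up to order six.

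The main obstacle is showing that $c$ stays bounded away from zero as $\kappa_3(x_s)\to 0$. Without that, the bound would be only $O(\kappa_3^{-2}/N)$ rather than $\Theta(\kappa_3^{-2}/N)$, and divergence would not follow. I would first compute the influence function of the combined estimator. It contains the term $2\beta_{js}x_s r_j x_s$ from the cross part of $x_j^2x_s$, and this term does not cancel against $\beta_{js}^2x_s^3$. Its variance contribution is $4\beta_{js}^2\,\mathbb{E}[x_s^4]\,\mathrm{Var}[r_j]$, up to lower-order corrections, and it is independent of skewness. For $\beta_{js}\neq 0$ and nondegenerate $r_j$, this keeps $c$ bounded below uniformly along any sequence with fixed $\beta_{js}$, fixed variances and bounded sixth moments.

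If $r_j$ is degenerate, I would instead use the marginal term $\mathrm{Var}[\hat{\kappa}_3(x_s)]\approx(\mu_6-6\mu_4\sigma^2+9\sigma^6-\mu_3^2)/N$. This is strictly positive unless $x_s$ is essentially two-point. Finally, letting $\kappa_3(x_s)\to 0$ with $c$ bounded below makes the leading term $c\,\kappa_3(x_s)^{-2}/N$ diverge. I would remark that the claim concerns divergence of the leading-order delta-method expression, since the expansion itself is valid only in the regime $\kappa_3(x_s)\neq 0$.
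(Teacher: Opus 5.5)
Your delta-method step and your derivation of $\kappa_{2,1}(x_j,x_s)=\beta_{js}^2\kappa_3(x_s)$ match the paper's proof. Your rewriting of the three terms as $\mathrm{Var}[\hat{\kappa}_{2,1}-\beta_{js}^2\hat{\kappa}_3]/\kappa_3(x_s)^2$ is the right object, and it is sharper than the paper. The paper argues that the first term dominates because $N\,\mathrm{Var}[\hat{\kappa}_{2,1}]$ stays bounded away from zero. In fact, to leading order the second and third terms sum to $-\beta_{js}^4\mathrm{Var}[\hat{\kappa}_3]/\kappa_3(x_s)^2$, so bounding the first term says nothing about the total. Everything therefore rests on a lower bound for $c=N\,\mathrm{Var}[\hat{\kappa}_{2,1}-\beta_{js}^2\hat{\kappa}_3]$, and that step is where your proposal has a genuine gap.

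Write $e=x_s$, $r=r_j$, $\sigma^2=\mathrm{Var}(e)$, $\sigma_r^2=\mathrm{Var}(r)$. Once the mean-estimation terms of both influence functions are included, the combined influence function is exactly $2\beta_{js}\,r\,(e^2-\sigma^2)+e\,(r^2-\sigma_r^2)$. Hence
\[
c=4\beta_{js}^2\sigma_r^2\,\mathrm{Var}(e^2)+\sigma^2\,\mathrm{Var}(r^2)+4\beta_{js}\,\mathbb{E}[r^3]\,\kappa_3(e).
\]
The term you singled out carries $\mathrm{Var}(x_s^2)=\mathbb{E}[x_s^4]-\sigma^4$, not $\mathbb{E}[x_s^4]$. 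The subtracted $\sigma^4$ is not a lower-order correction, and $\mathrm{Var}(x_s^2)$ can vanish together with the skewness. By the equality case of Pearson's inequality, every centred two-point law has $\mathrm{Var}(x_s^2)=\kappa_3(x_s)^2/\sigma^2$.

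This gives a counterexample. Take a single edge $x_s\to x_j$ with $e_j$ Rademacher, so $\mathrm{Var}(r^2)=\mathbb{E}[r^3]=0$. Let $e_s$ be two-point with fixed variance, tending to a symmetric law. Then $c/\kappa_3(x_s)^2=4\beta_{js}^2\sigma_r^2/\sigma^2$ is constant, and the leading term does not diverge. So your claim that $c$ is bounded below uniformly under fixed $\beta_{js}$, fixed variances and bounded sixth moments is false. The proposition's ``any sequence'' therefore cannot be proved as stated.

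Your computation does yield divergence under an added hypothesis, since the cross term vanishes as $\kappa_3(x_s)\to 0$. Two sufficient choices are $\liminf\mathrm{Var}(e_s^2)>0$, or $r_j$ held fixed with $\mathrm{Var}(r_j^2)>0$; the latter is automatic unless $r_j$ is a symmetric two-point law.

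Your degenerate-$r_j$ fallback is also invalid. If $r_j\equiv 0$, then $\hat{\kappa}_{2,1}(x_j,x_s)=\beta_{js}^2\hat{\kappa}_3(x_s)$ identically and $\hat{\alpha}_{js}$ is deterministic. So $\mathrm{Var}[\hat{\kappa}_3]$ cannot serve as a lower bound once your own combination has cancelled it. The case is excluded anyway, since $\mathrm{Var}(r_j)\geq\sigma_j^2>0$ under \lingam.
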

\begin{proof}
    See Appendix~\ref{app:proofs}.
\end{proof}

The fourth-order score $\tau^{(4)}$ in~\eqref{eq:tau4} avoids the issue, since $\kappa_4(x_s) \neq 0$ for every standard non-Gaussian distribution. Replacing only the score, however, is not enough. Let \fedishcfour denote the algorithm using $\tau^{(4)}$ for identification while retaining the third-order deflation from Equations~\eqref{eq:alpha_ishc} and \eqref{eq:ishc_update}. The deflation in~\eqref{eq:ishc_update} updates only $\kappa_3$ arrays. The score $\tau^{(4)}$ depends on $\kappa_4$ and on the joint cumulants $\kappa_{1,3}, \kappa_{2,2}, \kappa_{3,1}$, and none of these is touched by~\eqref{eq:ishc_update}. Every iteration of \fedishcfour therefore evaluates $\tau^{(4)}$ on the same matrix, restricted to the current active set. Identification and deflation have to be repaired together.

%
%
\section{The \fedrcd Family} \label{sec:methods}
We propose \fedrcd (Federated Regression-Directed Cumulants), a family of federated \lingam estimators that addresses both issues raised in Section~\ref{sec:pathologies}.

\subsection{OLS Deflation Coefficient} \label{sec:fedrcd}
The instability of $\hat{\alpha}_{js}$ in~\eqref{eq:alpha_ishc} comes from its denominator $\hat{\kappa}_3(x_s)$. Ordinary least squares supplies a replacement with a bounded denominator. When $x_s$ is the current source, it has no parents under the \lingam model, so $\Sigma_{js} = b_{js}\Sigma_{ss}$ exactly and the OLS coefficient $\hat{\beta}_{js} = \hat{\Sigma}_{js}/\hat{\Sigma}_{ss}$ is consistent for the same structural parameter $b_{js}$ that $\hat{\alpha}_{js}$ targets. Its denominator $\Sigma_{ss} = \mathrm{Var}(x_s)$ is strictly positive for any non-degenerate variable, whatever the noise distribution. Both $\hat{\Sigma}_{js}$ and $\hat{\Sigma}_{ss}$ are entries of the aggregated $\hat{\Sig}$, so $\hat{\beta}_{js}$ is computed once on the server, never on raw data, and takes the same value as in the centralised case whether or not the clients are IID. The residual $r_j = x_j - \beta_{js}\,x_s$ is orthogonal to $x_s$ in second order; structure at higher orders is handled by the closed-form cumulant updates of Appendix~\ref{app:updates}.

\begin{proposition}[Stability of the OLS deflation coefficient] \label{prop:stability}
    Assume $\mathbb{E}[x_i^4]\!<\!\infty, \forall i$, and $\Sigma_{ss}\!=\!\mathrm{Var}(x_s)\!>\!0$. Then
    \begin{equation} \label{eq:beta_variance}
        \mathrm{Var}[\hat{\beta}_{js}] = \frac{\mathrm{Var}[\hat{\Sigma}_{js}]}{\Sigma_{ss}^2} + \frac{\Sigma_{js}^2\,\mathrm{Var}[\hat{\Sigma}_{ss}]}{\Sigma_{ss}^4} - \frac{2\Sigma_{js}}{\Sigma_{ss}^3}\,\mathrm{Cov}[\hat{\Sigma}_{js},\hat{\Sigma}_{ss}] + O(N^{-2}) = O(N^{-1}),
    \end{equation}
    uniformly over noise distributions with $\mathrm{Var}(x_s) > 0$.
\end{proposition}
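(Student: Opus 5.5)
The plan mirrors the delta-method argument behind Proposition~\ref{prop:cdl}, but applies it to $f(a,b)=a/b$ at the point $(a,b)=(\Sigma_{js},\Sigma_{ss})$, whose denominator is bounded away from zero. First I will record that $\hat{\Sigma}_{js}$ and $\hat{\Sigma}_{ss}$ are (up to the $O(N^{-1})$ mean-centring correction) sample means of the i.i.d.\ products $x_jx_s$ and $x_s^2$. Under $\mathbb{E}[x_i^4]<\infty$, these products have finite second moments, since $\mathbb{E}[x_j^2x_s^2]\le(\mathbb{E}[x_j^4]\mathbb{E}[x_s^4])^{1/2}$ by Cauchy--Schwarz. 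Hence $\mathrm{Var}[\hat{\Sigma}_{js}]$, $\mathrm{Var}[\hat{\Sigma}_{ss}]$ and $\mathrm{Cov}[\hat{\Sigma}_{js},\hat{\Sigma}_{ss}]$ are each $c/N+O(N^{-2})$ with finite constants $c$ built from fourth moments. The aggregation is exact across clients, as described in Section~\ref{sec:federated}, so these are the same estimators as in the centralised case, and horizontal, vertical and hybrid partitions need no separate treatment.

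Next I will Taylor-expand $f$ to first order around $(\Sigma_{js},\Sigma_{ss})$. The partial derivatives are $\partial_a f=1/\Sigma_{ss}$ and $\partial_b f=-\Sigma_{js}/\Sigma_{ss}^2$. The quadratic form of the gradient with the covariance matrix of $(\hat{\Sigma}_{js},\hat{\Sigma}_{ss})$ gives exactly the three displayed terms in~\eqref{eq:beta_variance}. The remainder is controlled by second-order terms $(\hat a-a)^2$ and $(\hat a-a)(\hat b-b)$, each of order $N^{-1}$ in probability, so their contribution to the variance is $O(N^{-2})$.

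Then I will substitute $\Sigma_{js}=b_{js}\Sigma_{ss}$, which holds because $x_s$ is a source, and observe that each leading term becomes (fourth-moment quantity)$/(\Sigma_{ss}^2 N)$. For instance, the second term is $b_{js}^2\mathrm{Var}[\hat{\Sigma}_{ss}]/\Sigma_{ss}^2$. So the leading order is $O(N^{-1})$ with a constant depending only on fourth moments, $b_{js}$, and $\Sigma_{ss}^{-1}$. Unlike Proposition~\ref{prop:cdl}, no odd or higher cumulant of $e_s$ appears in a denominator, and this is the source of the claimed uniformity over noise distributions: the bound degrades only as $\Sigma_{ss}\to 0$ or as fourth moments blow up, never as skewness vanishes.

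The main obstacle is making the $O(N^{-2})$ remainder rigorous, since the variance of a ratio need not even exist when $\hat{\Sigma}_{ss}$ can be arbitrarily close to zero. My first approach is to interpret the statement as a delta-method (asymptotic) variance, as Proposition~\ref{prop:cdl} does. If a genuine moment bound is wanted, I will work on the event $\{\hat{\Sigma}_{ss}>\Sigma_{ss}/2\}$, whose complement has probability $O(N^{-1})$ by Chebyshev, and there bound the ratio's second-order Taylor remainder deterministically by $O(\Sigma_{ss}^{-3})$ times squared deviations. I will also read ``uniformly'' as uniform over noise laws with fourth moments and $\Sigma_{ss}$ bounded in a fixed compact range, and state that qualification explicitly.
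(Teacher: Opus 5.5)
Your primary route, the first-order delta method for $f(a,b)=a/b$ at $(\Sigma_{js},\Sigma_{ss})$ with $\Sigma_{ss}>0$, read as an asymptotic variance, is exactly the paper's proof. Your restriction of ``uniformly'' to noise laws with bounded fourth moments and $\Sigma_{ss}$ in a compact subset of $(0,\infty)$ is more accurate than the paper, whose proof only shows the constant is finite for each fixed distribution; if you pursue the optional genuine-moment version, note three points: the quadratic remainder terms are $(\hat a-a)(\hat b-b)$ and $(\hat b-b)^2$ (not $(\hat a-a)^2$, since $f$ is linear in $a$); a Chebyshev bound $P(\hat{\Sigma}_{ss}\le\Sigma_{ss}/2)=O(N^{-1})$ is of the same order as the leading term, so it cannot be absorbed into $O(N^{-2})$; and $O_p(N^{-1})$ remainders contribute $O(N^{-2})$ to the variance only under moment control beyond the stated fourth-moment assumption (sixth moments of $x$ for the cross term with the linear part, eighth for the remainder's own variance).
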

\begin{proof}
    See Appendix~\ref{app:proofs}.
\end{proof}

The cumulant ratio $\kappa_3(x_s)^{-2}/N$ of Proposition~\ref{prop:cdl} is replaced by a constant that depends only on second-order moments. Both estimators are consistent for $b_{js}$; the difference is purely numerical. The \fedrcd family uses $\tau^{(4)}$ for identification and $\hat{\beta}_{js}$ wherever a deflation coefficient is required. A second consequence matters in Section~\ref{sec:stratification}: the deflation step does not inject divergent noise into the cumulant arrays, so the depth ordering of variances and cumulants induced by the DAG survives the iteration.

\subsection{The \fedrcd-(NI/Def/X) Variants} \label{sec:variants}
The three variants (Algorithm~\ref{alg:fedrcd}) share the $\tau^{(4)}$ criterion, the $\hat{\beta}_{js}$ coefficient, and the client-side aggregation protocol. They differ in where deflation happens and how many rounds it requires. All three apply to horizontal, vertical, and hybrid federation, and recover $\hat{\bB}$ from the pre-deflation $\hat{\Sig}$ via adaptive Lasso with fixed $\lambda = 0.01\bar{\sigma}$ once the order is determined.

\fedrcdni computes $\tau^{(4)}$ once and sorts variables by the row sums $T_\tau(x_i) = \sum_{j \neq i} \tau_{ij}$ in a single pass at $O(p^2)$ server cost. There is no deflation step: by Theorem~4 of~\citet{Chen2025}, $T_\tau(x_i) < T_\tau(x_j)$ in population whenever $x_i$ is a predecessor of $x_j$, so the row-sum sort recovers the true order. At finite samples and on dense graphs, unresolved confounding adds noise. \fedrcdni is the exact federated counterpart of \hclingam~\citep{Chen2025} and supports \emph{exact instance-level federated unlearning}. Raw moments are additive over independent samples, hence also subtractive: given the raw-moment contribution of any subset to be forgotten (a single observation, a cohort within a client, or an entire client), the server subtracts it from the global aggregates and rescales the totals by $(N - n_{\text{rm}})^{-1}$, recovering exactly the statistics that would have been obtained had those samples never participated. In vertical and hybrid regimes the same procedure applies pair by pair, with the server keeping per-pair counts $N_{ij}$ to rescale each entry of the cumulant tensor. The one-message protocol therefore handles the full spectrum of General Data Protection Regulation right-to-erasure requests, from a single individual withdrawing consent to an entire institution leaving the federation, without retraining.

\fedrcddef adds algebraic deflation on the server after each source removal. Multilinearity of cumulants applied to the residual $r_j = x_j - \beta_{js}\,x_s$ yields closed-form updates of all aggregated arrays (Appendix~\ref{app:updates}; this extends Lemma~2 of~\citet{fedishc2026} from third to fourth order and from $\hat\alpha$ to the stable $\hat\beta$). The server applies these updates in place at $O(p^3)$ total cost, still in a single round. The deflation coefficient is stable by Proposition~\ref{prop:stability}, and the updates are exact at population level. At finite samples, however, every algebraic step injects estimation noise that compounds across $p-1$ updates. Exact federated unlearning is preserved because the protocol remains a one-round exchange.

\begin{algorithm2e}[htbp]
    \caption{\fedrcd Family} \label{alg:fedrcd}
    \footnotesize\setstretch{0.95}
    \DontPrintSemicolon \LinesNumbered \SetCommentSty{textnormal}
    \KwIn{Global stats $\hat{\kappa}_\bullet$, $\hat{\Sig}$ aggregated from all clients; set $U = \{1,\ldots,p\}$; \textbf{mode} $\in \{\text{\textsc{NI,Def,X}}\}$}
    \KwOut{Causal order $K$; matrix $\hat{\bB}$}
    $\hat{\Sig}_0 \leftarrow \hat{\Sig}$\tcp*{Save original covariance for $\hat{\bB}$ estimation}
    $K \leftarrow [\,]$\;
    \eIf{\normalfont\textbf{mode} $= \mathrm{NI}$}{
        Compute $\tau_{ij}$ for all $i \neq j$ via~\eqref{eq:tau4}\;
        $K \leftarrow \operatorname{argsort}\bigl(\textstyle\sum_{j \neq i}\tau_{ij}\bigr)_{i=1}^{p}$\tcp*{Single pass, no deflation}
    }{
        \While{$|U| > 1$}{
            Compute $\tau_{ij}$ for $i,j \in U$ via~\eqref{eq:tau4}\;
            $s \leftarrow \arg\min_{i \in U}\sum_{j \in U,\,j \neq i}\tau_{ij}$\tcp*{Identify source}
            Append $s$ to $K$\;
            $\hat{\beta}_{js} \leftarrow \hat{\Sigma}_{js}/\hat{\Sigma}_{ss}$ for each $j \in U \setminus \{s\}$\tcp*{OLS coefficient}
            \uIf{\normalfont\textbf{mode} $= \text{\textsc{Def}}$}{
                Update $\hat{\kappa}_\bullet,\,\hat{\Sig}$ server-side via~\eqref{eq:cov_update}--\eqref{eq:fedrcd_c21}\tcp*{Appendix \ref{app:updates}}
            }
            \ElseIf{\normalfont\textbf{mode} $= \mathrm{X}$}{
                Server broadcasts $(s,\,\hat{\boldsymbol{\beta}})$ to clients\;
                Clients: $x_j \leftarrow x_j - \hat{\beta}_{js}\,x_s$ for $j \in U\setminus\{s\}$; drop $x_s$\;
                Clients send fresh raw moments; server re-aggregates $\hat{\kappa}_\bullet,\,\hat{\Sig}$
            }
            $U \leftarrow U \setminus \{s\}$\;
        }
        Append remaining element of $U$ to $K$\;
    }
    $\hat{\bB} \leftarrow \text{AdaptiveLasso}(\hat{\Sig}_0,\,K)$\tcp*{Edge weights from original $\hat{\Sig}$}
\end{algorithm2e}

\fedrcdx pushes deflation back to the clients to avoid algebraic accumulation. In each of $p-1$ rounds, the server identifies the source via~\eqref{eq:source}, broadcasts the OLS coefficients, and each client deflates its local data, drops the identified source, and returns fresh sufficient statistics. Cumulants are recomputed from actual residuals at every round rather than approximated algebraically, and no approximation error accumulates. \fedrcdx is the federated counterpart of \hc~\citep{Chen2025}. The price is $p-1$ rounds at $O(p^3)$ total payload. Exact unlearning is no longer available, since each round depends on coefficients derived from the previous aggregate, and removing a client retroactively would invalidate every subsequent round.

Table~\ref{tab:complexity} summarises all methods. For single-round protocols, the total payload equals the per-round payload; for \fedrcdx it equals the per-round payload times $p-1$.

\begin{table}[htbp]
    \caption{Algorithm comparison. $M$: maximum kernel rank in \directlingam.}
    \label{tab:complexity}
    \centering
    \resizebox{\linewidth}{!}{%
    \begin{tabular*}{1.5\linewidth}{@{\extracolsep{\fill}}@{\hspace{0.5em}}cllllll@{\hspace{0.5em}}}
        \toprule
        & \bsc{Method} & \bsc{Deflation} & \bsc{Client} & \bsc{Comm./r} & \bsc{Server} & \bsc{Rounds} \\
        \midrule
        \multirowcell{3}{\rotatebox[origin=c]{90}{\textsc{Centr.}}}
        & \directlingam~\citep{shimizu2011} & Kernel independence test & $O(np^3M^2 + p^4M^3)$ & n/a & n/a & n/a \\
        & \hc~\citep{Chen2025}  & OLS on data & $O(np^3)$ & n/a & n/a & n/a \\
        & \hclingam~\citep{Chen2025} & None  & $O(np^2)$ & n/a & n/a & n/a \\
        \midrule
        \multirowcell{5}{\rotatebox[origin=c]{90}{\textsc{Federated}}}
        & \fedhc~\citep{fedishc2026}   & None  & $O(n_k p^2)$ & $O(p^2)$ & $O(p^2)$ & 1  \\
        & \fedishc~\citep{fedishc2026} & $\hat{\alpha} = \hat{\kappa}_{2,1}/\hat{\kappa}_3$ (3rd order) & $O(n_k p^2)$ & $O(p^2)$ & $O(p^3)$ & 1  \\
        & \fedrcdni (ours)& None  & $O(n_k p^2)$ & $O(p^2)$ & $O(p^2)$ & 1  \\
        & \fedrcddef (ours)  & OLS on agg. stats (server) & $O(n_k p^2)$ & $O(p^2)$ & $O(p^3)$ & 1  \\
        & \fedrcdx (ours) & OLS on local data (client) & $O(n_k p^3)$ & $O(p^2)$ & $O(p^2)$ & $p-1$   \\
        \bottomrule
    \end{tabular*}}
\end{table}

%
%
\section{Experimental Evaluation} \label{sec:experiments}

\subsection{Setup} \label{sec:setup}

\paragraph{Data generation.}
We sample \er ER-2 DAGs (expected density $0.2$) with edge weights from $U([-1.5,-0.5] \cup [0.5,1.5])$. Exogenous noise is drawn from eleven continuous families (Gaussian Cubed, Pareto, Logistic, Uniform, Laplace, Poisson, Exponential, Student-$t$, Gamma, Exponential Cubed, plus a Mixed setting where each variable draws from a different family). Defaults are $p = 20$, $n = 2000$ observations per client, and $K = 10$ clients. The main grid sweeps $p \in \{10, 20, 30, 50\}$ and $K \in \{1, 5, 10, 20, 50\}$ under horizontal partitioning. Results average 10 random seeds with standard error of the mean.

\paragraph{Baselines.}
Centralised: \directlingam~\citep{shimizu2011}, \hc and \hclingam~\citep{Chen2025}. Federated: \fedishc and its no-deflation variant \fedhc~\citep{fedishc2026}. \fedhc is the third-order analogue of \fedrcdni: same single-pass sort, same $O(p^2)$ server cost, but identification through $\tau^{(3)}$ rather than $\tau^{(4)}$. For a fair comparison, all methods recover edge weights via adaptive Lasso on the aggregated $\hat{\Sig}$ with $\lambda = 0.01\bar{\sigma}$. \fedrcdni and \fedrcdx produce the same orderings as \hclingam and \hc given the same aggregated statistics; we plot the centralised counterparts only when their traces add information.

\paragraph{Metrics.}
\lingam methods primarily recover a causal ordering $\sigma$. A pairwise F1 score between orderings penalises valid orderings, since it compares the estimated $\sigma$ against a single arbitrary topological sort extracted from the gold-standard DAG, even though multiple valid sorts typically exist. We instead evaluate against the partial order induced by the true DAG $\G$. The transitive closure $\G_+$ of $\G$ is the set of pairs $(x_i, x_j)$ for which there exists a directed path from $x_i$ to $x_j$ in $\G$, that is, $x_i$ is an ancestor of $x_j$. Every such ancestral relation $x_i \to x_j \in \G_+$ requires $x_i \prec_\sigma x_j$ in the estimated order; pairs with no ancestral relation~are~excluded.

\begin{definition}[Ancestral Accuracy] \label{def:ancacc}
    Let $\sigma$ be a learned causal ordering and $\G$ a ground-truth DAG with transitive closure $\G_+$. The Ancestral Accuracy is
    \begin{equation} \label{eq:ancacc}
        \AncAcc(\sigma, \G) = \frac{\bigl|\{(x_i, x_j) : x_i \to x_j \in \G_+,\; x_i \prec_\sigma x_j\}\bigr|}{|\G_+|}.
    \end{equation}
\end{definition}

\AncAcc rewards any ordering consistent with the ancestral relations of $\G$. We also report Structural Hamming Distance (SHD) when edge-level error is informative.

\subsection{Reproducibility} \label{subsec:reproducibility}
Algorithms are implemented in Python 3.13 using the \texttt{Flower}\footnote{\url{https://pypi.org/project/flwr/}} framework for federated orchestration. \directlingam relies on the \texttt{lingam}\footnote{\url{https://pypi.org/project/lingam/}} library; \hc, \hclingam, \fedishc, and \fedhc were reimplemented from scratch since no public source code was available. Source code and experiment scripts are at \url{https://github.com/ptorrijos99/FedLiNGAM}. Experiments run on an AMD Ryzen AI 9 HX 370 with 32~GB RAM.

\subsection{Robustness Across Noise Distributions} \label{sec:main_results}
Figure~\ref{fig:noise} aggregates \AncAcc and SHD by noise regime: \emph{symmetric} (Logistic, Uniform, Student-$t$, Laplace) where $\kappa_3 \approx 0$, \emph{mixed} (each variable drawn independently from a different family), and \emph{skewed} (the remaining unimodal families). Per-family breakdowns are deferred to Appendix~\ref{app:per_noise}; results on real-world BN topologies appear in Appendix~\ref{app:bnlearn}.

\begin{figure}[htbp]
    \centering
    \subfigure[\footnotesize \AncAcc $\uparrow$ by noise regime]{
        \includegraphics[width=0.47\linewidth]{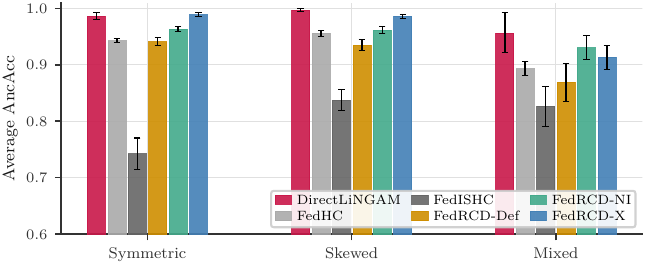}}
    \hfill
    \subfigure[\footnotesize SHD $\downarrow$ by noise regime]{
        \includegraphics[width=0.47\linewidth]{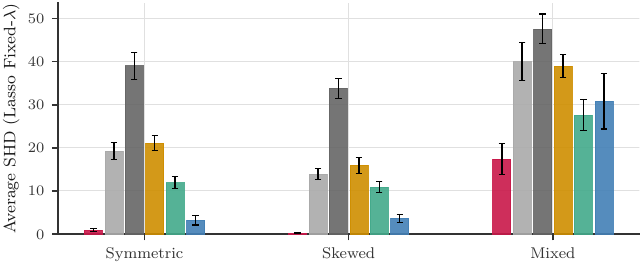}}
    \caption{Performance by noise regime ($p{=}20$, $K{=}10$).}
    \label{fig:noise}
\end{figure}

The hierarchy is consistent across symmetric and skewed regimes, and is the same on synthetic and \texttt{bnlearn} topologies. \fedishc trails substantially, more so on symmetric noise as expected from Section~\ref{sec:pathologies}. \directlingam (centralised) approaches perfect recovery. Among the federated methods, \fedrcdx is very close to \directlingam, with \fedrcdni a few points behind, then \fedhc, then \fedrcddef. The gap between symmetric and skewed regimes is smaller than Proposition~\ref{prop:cdl} alone would predict. The proposition formalises a divergence in the limit $\kappa_3 \to 0$, but finite-sample performance is governed by a different mechanism that we identify~in~Section~\ref{sec:stratification}.
The mixed regime breaks the pattern slightly. \fedishc recovers some accuracy because non-zero average skewness across variables provides partial signal, but it still trails. The federated methods cluster more tightly than under homogeneous noise. \fedrcdni overtakes \fedrcdx by a small margin with \fedhc close behind, while \directlingam does not recover the perfect order in this case. SHD tracks \AncAcc closely throughout.

\subsection{Ablation Study} \label{sec:ablation}

\paragraph{Identification versus deflation.}
To isolate the source of \fedishc's deficit, we implement \fedishcfour, which replaces only the identification criterion ($\tau^{(3)} \to \tau^{(4)}$) while keeping the original third-order deflation~\eqref{eq:ishc_update}. Figure~\ref{fig:scalability:a} reports the all-noise mean. \AncAcc rises from $0.802$ for \fedishc to $0.959$ for \fedishcfour, matching \fedrcdni ($0.960$), which performs no deflation at all. Going from $\tau^{(3)}$ to $\tau^{(4)}$ at fixed no-deflation (\fedhc to \fedrcdni) yields a small but consistent improvement (for $p \leq 20$). At fixed third-order $\hat{\alpha}$-deflation (\fedishc to \fedishcfour) the gain is much larger, because the bottleneck in \fedishc is identification rather than deflation. \fedishcfour matches \fedrcdni because its third-order deflation leaves the fourth-order tensors invariant. \fedrcddef performs genuine fourth-order updates, but compounding finite-sample algebraic noise limits its gains. \fedrcdx remains the best on average, and is the only iterative variant that recomputes cumulants from actual residuals at each round.

\paragraph{Scalability in $p$.}
Figure~\ref{fig:scalability:b} reports mean \AncAcc against $p$ at $K=10$. \directlingam and \fedrcdx are the most stable. \fedishcfour is the best of the rest at $p=10$ but converges with \fedrcdni and \fedhc as $p$ grows, in line with the order-mismatch issue of Section~\ref{sec:pathologies}: the third-order deflation in \fedishcfour leaves $\tau^{(4)}$ unchanged at every iteration, so the algorithm differs from a single-pass ranking only through the active-set restriction in the row sums. At small $p$ this restriction still helps; as $p$ grows, finite-sample noise on the row sums dominates and the two strategies converge.

\paragraph{Federation invariance.}
Figure~\ref{fig:scalability:c} reports \AncAcc against $K$ at $p=20$. All federated methods produce flat curves: cumulant aggregation is lossless given a fixed $\mathcal{D}$, independently of how $\mathcal{D}_k$ are distributed (IID or non-IID; horizontal, vertical, or hybrid). Each entry of the joint cumulant tensor is a function of the bivariate marginal alone, so any pair-covering split yields exactly the same global statistics as the centralised computation, and the curves above transfer literally to the vertical and hybrid regimes.

\begin{figure}[htbp]
    \centering
    \subfigure[\footnotesize \fedishcfour ablation ($p{=}20$, $K{=}10$).]{
        \includegraphics[width=0.25\linewidth]{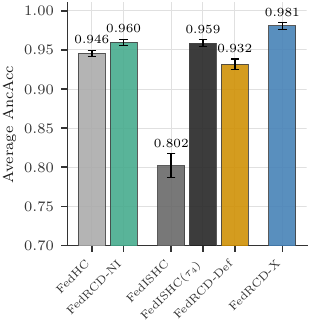}\label{fig:scalability:a}
    }
    \hfill
    \subfigure[\footnotesize Scalability ($K{=}10$).]{
        \includegraphics[width=0.32\linewidth]{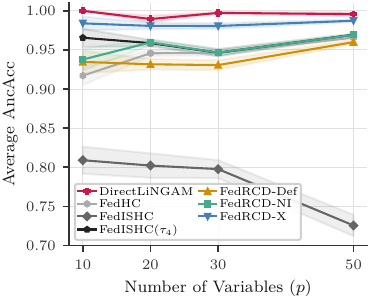}\label{fig:scalability:b}
    }
    \hfill
    \subfigure[\footnotesize Federation invariance ($p{=}20$).]{
        \includegraphics[width=0.32\linewidth]{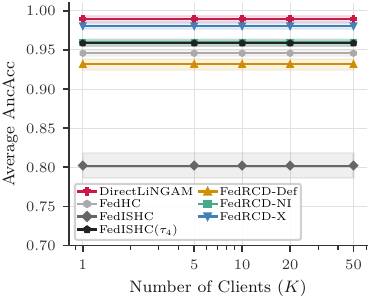}\label{fig:scalability:c}
    }
    \caption{Ablation study on \fedishcfour, scalability, and federation invariance.}
    \label{fig:scalability}
\end{figure}

\subsection{What the cumulant family actually reads} \label{sec:stratification}
The previous sections have argued for \fedrcd on its own merits: it removes a divergence, stabilises deflation, and matches the centralised counterparts of \citet{Chen2025} given the same aggregated statistics. The diagnostic that follows asks a less flattering question. The score $\tau^{(k)}$ is by construction zero in population at true sources, and identification of the order rests on row sums of cumulant tensors. At the sample sizes we actually run, what does that ranking line up with?
\citet{reisach2021nips} define the \emph{varsortability} $v$ of a dataset as the fraction of directed paths along which marginal variance is monotone, and report $v > 0.94$ under generic Additive Noise Model simulations. In that regime, MSE-based continuous methods (NOTEARS, GOLEM-EV, MSE-GDS) attain state-of-the-art recovery on raw data and collapse under marginal standardisation, while \directlingam, PC, and GES are unaffected because they use scale-invariant criteria. The effect is a property of the data scale, not of any specific method.
The same mechanism acts on cumulant magnitudes. Under \lingam, descendants accumulate variance from their ancestors, $\mathrm{Var}(x_i)$ is monotone in DAG depth on average, and $\kappa_m(x_i)$ inherits this scale at every order: empirically $\kappa_m(x_i)$ behaves as $\sigma_i^m$ to leading order. The row sums $\sum_j \tau^{(k)}_{ij}$ that drive identification across the entire cumulant family, federated or centralised, are dominated at finite $N$ by this depth-monotone scale rather than by the population asymmetry, which vanishes for true sources. We refer to the regularity as \emph{variance stratification}; it is the cumulant counterpart of varsortability.
To check whether stratification is what the family reads, we standardise each variable marginally before aggregating cumulants: $\tilde{x}_i = x_i/\sigma_i$. The transform is a diagonal rescaling that preserves \lingam identifiability (the order is unchanged) but kills the variance ladder, taking $v$ from $0.953$ on raw data to exactly $0.5$. Table~\ref{tab:stratification} reports \AncAcc $\pm$ SD in both regimes.

\begin{table}[htbp]
    \caption{Mean \AncAcc on raw and marginally standardised data ($p=20$, $K=10$). Standardisation removes the depth-monotone variance ladder, reducing $v$ to $0.5$. Values below the random baseline of $0.5$ indicate inversion of the true~order.} \label{tab:stratification}
    \centering \footnotesize
    \resizebox{\linewidth}{!}{%
    \begin{tabular*}{1.3\linewidth}{@{\extracolsep{\fill}} l S[table-format=1.3(3), separate-uncertainty] S[table-format=1.3(3), separate-uncertainty] S[table-format=1.3(3), separate-uncertainty] S[table-format=1.3(3), separate-uncertainty] @{}}
        \toprule
        & \multicolumn{2}{c}{\bsc{Symmetric}} & \multicolumn{2}{c}{\bsc{Skewed}} \\
        \cmidrule(r){2-3}\cmidrule(l){4-5}
        \bsc{Method} & {{\bsc{Raw}} ($v\text{=}0.953$)} & {{\bsc{Standardised}} ($v\text{=}0.5$)} & {{\bsc{Raw}} ($v\text{=}0.953$)} & {{\bsc{Standardised}} ($v\text{=}0.5$)} \\
        \midrule
        \directlingam ($K\!=\!1$)      & \bfseries 0.996 \pm 0.002 & \bfseries 0.996 \pm 0.002 & \bfseries 0.998 \pm 0.001 & \bfseries 0.998 \pm 0.001 \\
        \midrule
        \fedhc                         & 0.937 \pm 0.003 & 0.410 \pm 0.014 & 0.933 \pm 0.004 & 0.573 \pm 0.016 \\
        \fedishc                       & 0.769 \pm 0.009 & 0.470 \pm 0.013 & 0.839 \pm 0.008 & 0.436 \pm 0.016 \\
        \fedishcfour                   & 0.940 \pm 0.003 & 0.387 \pm 0.013 & 0.942 \pm 0.004 & 0.521 \pm 0.020 \\
        \fedrcddef                     & 0.924 \pm 0.005 & 0.308 \pm 0.010 & 0.917 \pm 0.007 & 0.358 \pm 0.017 \\
        \hclingam ($K\!=\!1$) / \fedrcdni & 0.940 \pm 0.003 & 0.303 \pm 0.013 & 0.942 \pm 0.004 & 0.441 \pm 0.022 \\
        \hc ($K\!=\!1$) / \fedrcdx       & 0.978 \pm 0.003 & 0.368 \pm 0.014 & 0.976 \pm 0.004 & 0.470 \pm 0.019 \\
        \bottomrule
    \end{tabular*}}
\end{table}

Every cumulant-based method collapses under standardisation, in both regimes, regardless of cumulant order ($\tau^{(3)}$ or $\tau^{(4)}$), of deflation strategy (none, server-side algebraic, or client-side OLS), and of whether the algorithm is centralised or federated. The pattern matches what~\citet{reisach2021nips} reported for MSE-based methods, here extended to a new class of estimators. Standardised \AncAcc falls at or below the conditional-random baseline of $0.5$, with several methods in the $0.30$--$0.40$ range. The strictly below-random performance is a stronger effect than the one observed for NOTEARS, and the mechanism is direct: standardisation rescales $\kappa_m(x_i)$ by $\sigma_i^{-m}$, deeper nodes lose more cumulant magnitude than sources, the depth-monotone scale reverses, and the row-sum ranking confidently reads the inverted ladder. \directlingam is unaffected because its kernel-based independence tests on raw residuals access the population asymmetry directly.

The asymmetry that $\tau^{(3)}$ and $\tau^{(4)}$ encode is present in the data but not accessible to the cumulant family at $N$ in the thousands; aggregation of order-$k$ tensors confines them to the stratification proxy. This is the price of exact federation under cumulant aggregation, and it parallels the price MSE-based methods pay for differentiable acyclicity~\citep{reisach2021nips}. It also explains the raw-data gap between \fedishc and \fedrcdx under symmetric noise. \fedishc's deflation injects the divergent noise of $\hat{\alpha}_{js}$ into~\eqref{eq:ishc_update} and corrupts the depth-monotone scale across the $p-1$ updates. \fedrcdx recomputes cumulants from actual residuals at every round and restores stratification on increasingly clean data. \fedhc and \fedrcdni preserve the original stratification trivially. \fedrcddef sits in between: a stable coefficient, but algebraic updates that still inject finite-sample noise.

%
%
\section{Conclusion} \label{sec:conclusion}
We started from the theoretical failure mode of \fedishc under symmetric noise (the third self-cumulant in the denominator goes to zero) and from the slightly less obvious follow-up: replacing the score is not enough, because the deflation operates on the wrong order. The \fedrcd family fixes both at once by working in the fourth-order domain with a covariance-based OLS coefficient. \fedrcdni and \fedrcdx are exact federated counterparts of \hclingam and \hc, match centralised performance given the same aggregated statistics, apply to horizontal, vertical, and hybrid federation, and support exact unlearning at any granularity in the single-round variants (\fedrcdni and \fedrcddef).

The diagnostic of Section~\ref{sec:stratification} is the part of the paper that surprised us most. The entire cumulant-based federated family, together with its centralised baselines, ranks variables by the variance ladder induced by the DAG rather than by the population asymmetry that the scores encode at zero. Marginal standardisation collapses every cumulant method to near-random ordering, while scale-invariant \directlingam keeps full performance. This places the cumulant family squarely in the scale-dependent regime that~\citet{reisach2021nips} identified for MSE-based methods such as NOTEARS or GOLEM-EV. The family is therefore appropriate when downstream variables accumulate variance from their ancestors, a regime that is structural rather than synthetic, as the Danube river-flow benchmark of~\citet{Chen2025} illustrates. Within it, \fedrcdx reaches the ceiling of exact federation under cumulant aggregation. \fedishc falls below that ceiling under both symmetric and asymmetric noise. Open directions for future work include determining when stratification suffices, finding the sample size required to access population asymmetry, and applying differential privacy to transmitted~tensors.

\section*{Acknowledgements}
This work was supported by PID2022-139293NB-C32 (MICIU/AEI/10.13039/501100011033 and ERDF, EU), FPU21/01074 (MICIU/AEI/10.13039/501100011033 and ESF+), and 2025-GRIN-38476 (Universidad de Castilla-La Mancha and ERDF, A way of making Europe).
Fabio Stella has been supported by the MUR under the grant ``Dipartimenti di Eccellenza 2023-2027'' of the Department of Informatics, Systems and Communication of the University of Milano-Bicocca, Milan, Italy, and by the National Plan for NRRP Complementary Investments (Project n. PNC0000003 - AdvaNced Technologies for Human-centrEd Medicine (ANTHEM)).

\bibliography{references}

%
%
\appendix

%
%
\section{Proofs of Propositions~\ref{prop:cdl} and~\ref{prop:stability}} \label{app:proofs}

\begin{proof}[Proof of Proposition~\ref{prop:cdl}]
    Let $a = \kappa_{2,1}(x_j, x_s)$ and $b = \kappa_3(x_s)$ denote the population values, and let $\hat{a}, \hat{b}$ be their unbiased aggregated estimators with $\mathrm{Var}[\hat{a}] = \sigma_a^2/N$ and $\mathrm{Var}[\hat{b}] = \sigma_b^2/N$. By the delta method~\citep[Ch.~3]{vandervaart1998} for $f(a,b) = a/b$ with $b \neq 0$,
    \begin{equation}
        \mathrm{Var}[f(\hat{a},\hat{b})] \approx \frac{1}{b^2}\,\frac{\sigma_a^2}{N} + \frac{a^2}{b^4}\,\frac{\sigma_b^2}{N} - \frac{2a}{b^3}\,\frac{\sigma_{ab}}{N},
    \end{equation}
    which gives~\eqref{eq:cdl_variance}. Under the \lingam model $a = \beta_{js}^2 b$, so $a^2/b^4 = \beta_{js}^4/b^2$ and all three terms are $O(b^{-2}/N)$. The first term dominates: $\sigma_a^2/(b^2 N)$, where $\sigma_a^2 = N\,\mathrm{Var}[\hat{\kappa}_{2,1}]$ involves joint fourth-order moments of $x_j$ and $x_s$ that remain bounded away from zero even when $\kappa_3(x_s) = 0$, since the joint distribution of $(x_j, x_s)$ retains non-Gaussian fourth-order structure through the mixing. Hence the leading order is $\Theta(\kappa_3(x_s)^{-2}/N)$. The sixth-moment condition guarantees that $\sigma_a^2$, $\sigma_b^2$, and $\sigma_{ab}$ are finite.
\end{proof}

\begin{proof}[Proof of Proposition~\ref{prop:stability}]
    The result is a standard application of the delta method to a well-conditioned ratio; we record it explicitly to enable the contrast with Proposition~\ref{prop:cdl}. Let $\mu = \Sigma_{js}$ and $\nu = \Sigma_{ss} > 0$. Sample covariance estimators satisfy $\mathrm{Var}[\hat{\Sigma}_{js}] = O(N^{-1})$ under bounded fourth moments~\citep[Ch.~3]{vandervaart1998}. The delta method applied to \mbox{$f(\mu,\nu) = \mu/\nu$~gives}
    \begin{equation}
        \mathrm{Var}[\hat{\beta}_{js}] \approx \frac{1}{\nu^2}\,\mathrm{Var}[\hat{\Sigma}_{js}] + \frac{\mu^2}{\nu^4}\,\mathrm{Var}[\hat{\Sigma}_{ss}] - \frac{2\mu}{\nu^3}\,\mathrm{Cov}[\hat{\Sigma}_{js},\hat{\Sigma}_{ss}] = O(N^{-1}),
    \end{equation}
    with constant $C = \sigma_\mu^2/\nu^2 + \mu^2 \sigma_\nu^2/\nu^4 - 2\mu\,\sigma_{\mu\nu}/\nu^3$ bounded for any fixed distribution with $\nu = \mathrm{Var}(x_s) > 0$. No cumulant near zero enters the denominator.
\end{proof}

%
%
\section{Algebraic Cumulant Updates for FedRCD-Def} \label{app:updates}
The server-side variant \fedrcddef applies in-place updates to all aggregated arrays after replacing each variable $x_j$ with the residual $r_j = x_j - \beta_{js}\,x_s$, where $x_s$ is the source identified in the current iteration. Multilinearity of cumulants yields closed-form expressions, extending Lemma~2 of~\citet{fedishc2026} from third to fourth order and from $\hat\alpha$ to the stable~$\hat\beta$:
\begin{align} \label{eq:cov_update}
    \Sigma_{ij}' &= \Sigma_{ij} - \beta_{is}\,\beta_{js}\,\Sigma_{ss}, \\
    \kappa_3(x_j)' &= \kappa_3(x_j) - \beta_{js}^3\,\kappa_3(x_s) - 3\beta_{js}\,\kappa_{2,1}(x_j,x_s) + 3\beta_{js}^2\,\kappa_{1,2}(x_j,x_s), \label{eq:fedrcd_c3} \\
    \kappa_4(x_j)' &= \kappa_4(x_j) - 4\beta_{js}\,\kappa_{3,1}(x_j,x_s) + 6\beta_{js}^2\,\kappa_{2,2}(x_j,x_s) - 4\beta_{js}^3\,\kappa_{1,3}(x_j,x_s) + \beta_{js}^4\,\kappa_4(x_s), \label{eq:fedrcd_c4} \\
    \kappa_{1,2}(x_i,x_j)' &= \kappa_{1,2}(x_i,x_j) - \beta_{is}\,\beta_{js}^2\,\kappa_3(x_s), \label{eq:fedrcd_c12} \\
    \kappa_{2,1}(x_i,x_j)' &= \kappa_{2,1}(x_i,x_j) - \beta_{is}^2\,\beta_{js}\,\kappa_3(x_s). \label{eq:fedrcd_c21}
\end{align}
Equations~\eqref{eq:fedrcd_c3} and~\eqref{eq:fedrcd_c4} use the empirical cross-cumulants directly, without the substitution $\kappa_{2,1}(x_j,x_s) \approx \hat{\alpha}_{js}^2\,\kappa_3(x_s)$ that \fedishc applies in~\eqref{eq:ishc_update}. Equations~\eqref{eq:fedrcd_c12} and~\eqref{eq:fedrcd_c21} invoke the source condition $\kappa_{2,1}(x_s, x_i) \approx \beta_{is}\,\kappa_3(x_s)$ to avoid transmitting trivariate cumulants at $O(p^3)$ cost; the approximation is exact in population when $x_s$ is a true source.

%
%
\section{Extended Experimental Results} \label{app:graphs}

\subsection{Per-Noise-Family Breakdown} \label{app:per_noise}
For completeness, Figure~\ref{fig:noise-appx} reports \AncAcc on each of the eleven individual noise families separately, and Figure~\ref{fig:shd-appx} the corresponding SHD. The patterns of Section~\ref{sec:main_results} hold uniformly: \fedishc is consistently the worst federated method, \fedrcdx the best (except with Mixed noise regime), with \fedrcdni and \fedhc tracking close behind.

\begin{figure}[htbp]
    \centering
    \includegraphics[width=\linewidth]{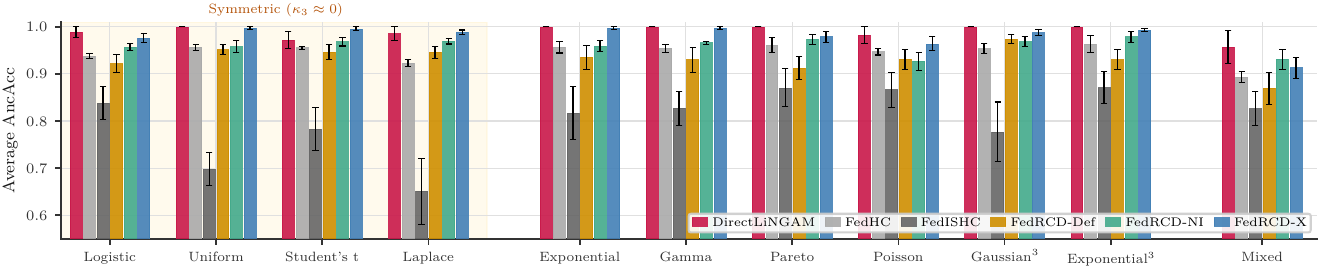}
    \caption{\AncAcc per noise family ($p{=}20$, $K{=}10$).}
    \label{fig:noise-appx}
\end{figure}

\begin{figure}[htbp]
    \centering
    \includegraphics[width=\linewidth]{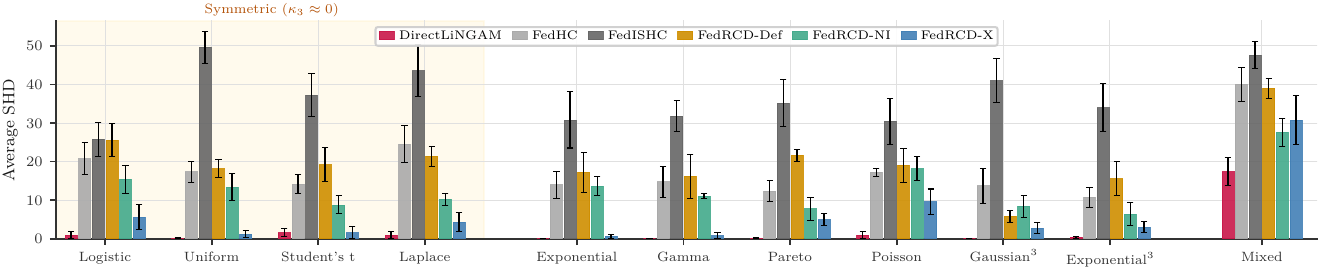}
    \caption{SHD per noise family ($p{=}20$, $K{=}10$).}
    \label{fig:shd-appx}
\end{figure}

\FloatBarrier
\subsection{Real-World Topologies} \label{app:bnlearn}
Figure~\ref{fig:bnlearn} repeats the noise-regime analysis on eight standard networks from the \texttt{bnlearn} BN repository\footnote{\url{https://www.bnlearn.com/bnrepository/}}: \textsc{Asia}, \textsc{Sachs}, \textsc{Child}, \textsc{Insurance}, \textsc{Water}, \textsc{Mildew}, \textsc{Alarm}, and \textsc{Barley}, ranging from $p=8$ to $p=48$. Each topology runs through the eleven noise families with the same edge-weight pipeline as the main experiments. The pattern mirrors Figure~\ref{fig:noise}. \directlingam almost recovers the perfect orders, but its SHD under mixed noise fails slightly. \fedrcdx leads on symmetric and skewed inputs by a clear margin; under mixed noise the federated methods cluster, with \fedhc, \fedrcdni, and \fedrcdx essentially tied.

\begin{figure}[htbp]
    \centering
    \subfigure[\footnotesize \AncAcc $\uparrow$ by noise regime]{
        \includegraphics[width=0.47\linewidth]{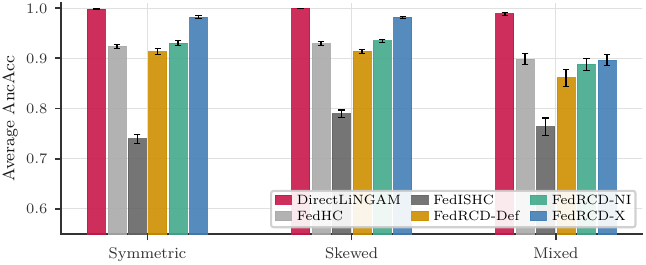}
    }
    \hfill
    \subfigure[\footnotesize SHD $\downarrow$ by noise regime]{
        \includegraphics[width=0.47\linewidth]{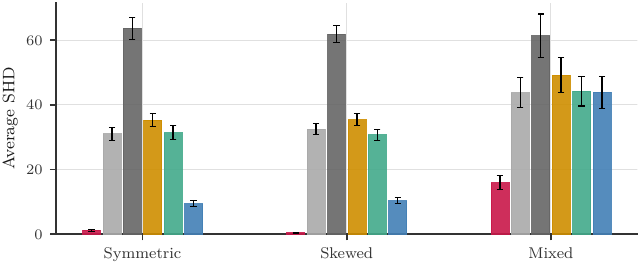}
    }
    \caption{Mean performance on eight \textsc{bnlearn} topologies ($K{=}10$).}
    \label{fig:bnlearn}
\end{figure}

\end{document}